\def\REM#1{}
\def\deg#1#2{{}^{#1\!\!}/#2}
\newcommand{\upts}{\uparrow}
\newcommand{\downts}{\downarrow}
\def\tu#1{\langle #1\rangle}
\newcommand{\up}{\uparrow}
\newcommand{\down}{\downarrow}
\newtheorem{theorem}{Theorem}%
\newtheorem{example}{Example}
\newcommand{\noticka}[1]{}
\def\deg#1#2{{}^{#1\!}/#2}
\begin{document}
%






%

\begin{center}
\textbf{\Large Discovery of factors in matrices with grades}

\bigskip

{\large Radim Belohlavek, Vilem Vychodil

\medskip

Data Analysis and Modeling Lab\\
Dept. Computer Science, Palack\'y University,  Czech Republic\\
   e-mail: radim.belohlavek@am.org, vychodil@acm.org

}  
\end{center}

\medskip

\begin{abstract}
  We present an approach to decomposition and factor analysis of matrices 
  with ordinal data.
  The matrix entries are grades to which objects represented by rows 
  satisfy attributes represented by columns, e.g. grades to which 
  an image is red, a product has a given feature, or a person performs 
  well in a test.
  We assume that the grades form a bounded scale 
 equipped with certain aggregation operators and  conforms
  to the structure of a complete residuated lattice.
  We present a greedy approximation algorithm for the problem of decomposition 
  of such matrix in a product of two matrices with grades
   under the restriction that the number of factors be small.
  Our algorithm is based on a geometric insight provided by a 
  theorem identifying
  particular rectangular-shaped submatrices as optimal factors for the
  decompositions. These factors correspond to formal concepts of the input data 
  and allow  an easy interpretation of the decomposition. 
  We present illustrative examples and experimental evaluation.

\bigskip

\noindent
\textbf{Keywords:}
factor analysis, ordinal data,
 fuzzy relation,  fuzzy logic, concept lattice
\end{abstract}




\section{Introduction}

\paragraph{Problem Description}

Data dimensionality reduction is 
fundamental  for understanding and management of data.
In traditional approaches, such as factor analysis, a decomposition of 
an object-variable matrix
is sought into an object-factor matrix and a factor-variable matrix with
the number of factors reasonably small.
Compared to the original variables, the factors are considered more fundamental 
concepts, which are hidden in the data.
Their discovery and interpretation, which is central importance in our paper,
helps better understand the data.

In this paper, we consider decompositions of matrices $I$ with 
a particular type of ordinal data. Namely, each entry $I_{ij}$ of $I$ 
represents a grade to which the
object corresponding to the $i$th row has, or is incident with, the attribute
corresponding to the $j$th row. 
Examples of such data are results of questionnaires where
respondents (rows) rate services, products, etc., according to various
criteria (columns); results of performance evaluation of people (rows) by 
various tests (columns); or binary data in which
case there are only two grades, $0$ (no, failure) and $1$ (yes, success).
Our goal is to
decompose an $n\times m$ object-attribute matrix $I$ into 
a product
\begin{equation}
\label{eqn:IAB}
    I=A\circ B
\end{equation}
of an $n\times k$ object-factor
matrix $A$ and a $k\times m$ factor-attribute matrix $B$ 
with the number $k$ of factors as small as possible.

The scenario is thus similar to ordinary matrix decomposition problems but there 
are important differences. 
First, we assume that the entries of $I$, i.e. the grades, as well as the entries
of $A$ and $B$ are taken from a bounded scale $L$ of grades, such as 
the real unit interval $L=[0,1]$ or the Likert scale $L=\{1,\dots,5\}$ of degrees of satisfaction.
Second, the matrix composition operation $\circ$ used in our decompositions
is not the usual matrix product. 
Instead, we use the t-norm-based product with a t-norm $\otimes$ being a function 
used for aggregation of grades.
In particular, $A\circ B$ is defined by
\begin{equation}
\label{eqn:dec}
   (A\circ B)_{ij}=\textstyle\bigvee_{l=1}^k A_{il}\otimes B_{lj}.
\end{equation}
where $\bigvee$ denoted the supremum (maximum, if $L$ is linearly ordered).
The ordinary Boolean matrix product is a particular case of
this product in which the scale $L$ has $0$ and $1$ as the only grades
and $a\otimes b=\min(a,b)$.
Also, when $A$ and $B$ are thought of as fuzzy relations, $A\circ B$
is exactly the usual composition of fuzzy relations, see e.g. \cite{Got:TMVL,KlYu:FSFL}.
It is to be emphasized that we attempt to treat graded incidence
data in a way which is compatible with its semantics.
This need has been recognized long ago in mathematical
psychology, in particular in measurement theory \cite{Krea:FM}.
For example, even if we represent the grades by numbers such as 
$0\sim\,$strongly disagree,
$\frac{1}{4}\sim\,$disagree, \dots, $1\sim\,$strongly agree,
addition, multiplication by real numbers, and linear combination of graded
incidence data may not have natural meaning. 
Consequently, decomposition
of a matrix $I$ with grades into the ordinary matrix product of arbitrary
real-valued matrices $A$ and $B$ suffers from a difficulty to interpret 
$A$ and $B$, as well as to interpret the way  $I$ is reconstructed from, or 
explained by, $A$ and $B$.
This is not to say that the usual matrix decompositions of incidence data $I$
may not be useful.  For example, \cite{MiMiGiDaMa:TDBP,TaMiGiMa:Widybd} report 
that decompositions of binary matrices into real-valued matrices may yield better 
reconstruction accuracies. Hence, as far as the dimensionality reduction aspect
(the technical aspect) is concerned, ordinary decompositions may be 
favorable. However, when the knowledge discovery aspect plays
a role, attention needs to be paid to the semantics of decomposition.
Our algorithm is based on \cite{Bel:Odmerl}, in particular on using formal concepts of $I$
as factors.  This is important both from the technical viewpoint,
since due to \cite{Bel:Odmerl} optimal decompositions may be obtained this way,
and the knowledge discovery viewpoint, since formal concepts may naturally
be interpreted.


\paragraph{Related Work}
\label{sec:rw}

Recently, new methods of matrix decomposition and dimensionality reduction
have been developed.  
One aim is to have methods which are capable of discovering possibly
non-linear relationships between the original space and the lower 
dimensional space \cite{RoSa:Ndrlle,Teea:Ggfndr}. 
Another is driven by the need to take into account constraints 
imposed by the semantics of the data. 
Examples include nonnegative matrix factorization, in which
the matrices are constrained to those with nonnegative entries
and which leads to additive parts-based discovery of features
in data \cite{LeSe:Lponmf}.
Another example, relevant to this paper,
is Boolean matrix decomposition.
Early work on this problem was done in
\cite{NaMaWoAm:Mahlas,Sto:Sbpinpc}
which already include complexity results showing the hardness of problems
related to Boolean matrix decompositions. 
Recent work on this topic includes
\cite{BeVy:Dof,FrHuMuPo:Bfahlam,MiMu:Bfa,MiMiGiDaMa:TDBP,NaMaWoAm:Mahlas}.
As was mentioned above, Boolean matrix decomposition is a particular case of the 
problem considered in this paper. In particular, the present approach is inspired by  \cite{BeVy:Dof}.

Note also that partly related to this paper are 
methods for decomposition of binary matrices into
non-binary ones
such as \cite{Lee:Pcabd,SaOr,Scea:Glmpca,TaTa:Bpca,ZiVe:Ticabi},
see also \cite{TaMiGiMa:Widybd} for further references.

\section{Decomposition and Factors}
\label{sec:df}

\subsection{Decomposition and the Factor Model}

As was mentioned above, we assume that for the problem
of finding a decomposition (\ref{eqn:IAB}) of $I$ with the
matrix product defined by (\ref{eqn:dec}), 
the set $L$ of grades forms a bounded scale equipped with an aggregation
operation $\otimes$.
In particular, we assume that $L$  is a complete lattice
bounded by $0$ and $1$ and that $\otimes$ is a binary operation on $L$
that is commutative, associative, has $1$ as its neutral element, and
commutes with suprema, i.e.
\[\textstyle
   a\otimes\bigvee_{k\in K} b_k = \bigvee_{k\in K} (a\otimes b_k).
\]
Note that this in particular implies $a \otimes 1 = a$.
It is well-known, see e.g. \cite{Got:TMVL,Haj:MFL,KlMePa:TN},
 that for any such operation, one may define its residuum
$\to$  by
\begin{equation}
  \nonumber
  a\rightarrow b =\max\{c\in L\,|\, a\otimes c\leq b\}.
\end{equation}
The residuum satisfies an important technical condition called adjointness,
namely,
\[
  a\otimes b\leq c \mbox{ if{}f } a\leq b\rightarrow c.
\]
$L$ together with $\otimes$ and $\rightarrow$ satisfying the above conditions 
forms a complete residuated lattice \cite{WaDi:Rl}.

Complete residuated lattices are well known in fuzzy logic where are used as
the structures of truth degrees with $\otimes$ and $\to$ being the truth
functions of (many-valued) conjunction and implication.
Important examples include those with $L=[0,1]$ and $\otimes$ being
a continuous t-norm, such as $a\otimes b=\min(a,b)$ (G\"odel t-norm),
$a\otimes b=a\cdot b$ (Goguen t-norm), and 
$a\otimes b=\max(0,a+b-1)$ ({\L}ukasiewicz t-norm);
or $L$ being a finite chain equipped with the restriction of G\"odel t-norm,
{\L}ukasiewicz t-norm, or other suitable operation.
Since these matters are routinely known, we omit details and refer the reader
for further examples and properties of residuated lattices
to \cite{Got:TMVL,Haj:MFL,KlMePa:TN}.

Consider now the meaning of the factor model given by 
(\ref{eqn:IAB}) and  (\ref{eqn:dec}).
The matrices $A$ and $B$ represent relationships between objects and factors,
and between factors and the original attributes. 
We interpret $A_{il}$ as the degree to which the factor $l$ applies to the object $i$,
i.e. the truth degree of the proposition 
``factor $l$ applies to object $i$'';
and $B_{lj}$ as the degree to which the attribute $j$ is a particular manifestation
of the factor $l$, i.e. the truth degree of the proposition 
``attribute $j$ is a manifestation of factor $l$''.
Therefore, due to basic principles of fuzzy logic,
if $I=A\circ B$, the discovered factors explain the original relationship
between objects and attributes, represented by $I$, via $A$ and $B$
as follows:
the degree $I_{ij}$ to which the object $i$ has the attribute $j$ equals
the degree of the proposition
``there exists factor $l$ such that $l$ applies to $i$ and $j$ is a particular manifestation
of $l$''. 

As the nature of the relationship between objects and attributes via factors
 is traditionally of interest, it is worth noting that 
in our case,
the attributes are expressed by means of factors in a non-linear manner:

\begin{example}\label{ex:nonl}
\upshape
With {\L}ukasiewicz t-norm, let $I=A\circ B$ be 
\[
  \renewcommand{\arraystretch}{0.8}
  \arraycolsep=0.7mm 
   \left(
    \begin{array}{ccc}
       0.3 &  0.0 & 0.1 \\
       0.3 &  0.7 & 0.5 \\
       0.5 &  0.8 & 0.6
    \end{array}
   \right)=
   \left(
    \begin{array}{cc}
       0.2 &  0.8 \\
       0.9 &  0.8  \\
       1.0 &  1.0 
    \end{array}
   \right)
   \circ
   \left(
    \begin{array}{ccc}
       0.4 &  0.8 & 0.6 \\
       0.5 &  0.2 & 0.3
    \end{array}
   \right)\!\!.
\]
Then for $Q_1=(0.6\ 0.2)$ and $Q_2=(0.4\ 0.3)$
we have
$(Q_1+Q_2)\circ B=(1.0\ 0.5)\circ B=(0.4\ 0.8\ 0.6)
\not=(0.0\ 0.6\ 0.2)=(0.0\ 0.4\ 0.2)+(0.0\ 0.2\ 0.0)=Q_1\circ B + Q_2\circ B$.
\end{example}

\subsection{Factors for Decomposition}

We now need to recall a result from \cite{Bel:Odmerl}
saying that optimal decompositions of $I$ may be attained
by using formal concepts of $I$ as factors.
Denote by $L^U$ the set of all fuzzy sets in a set $U$
with truth degrees from $L$, i.e.
the set of all mappings from $U$ to $L$, and put 
$X=\{1,\dots,n\}$ (objects) and $Y=\{1,\dots,m\}$ (attributes).

  A \emph{formal concept} of $I$ is any pair $\tu{C,D}$ of fuzzy sets
  $C\in L^X$ and $D\in L^Y$ for which
  $C^\upts=D$ and $D^\downts=C$ where 
the operators ${}^\upts\!:L^X\rightarrow L^Y$ and 
  ${}^\downts\!:L^Y\rightarrow L^X$ are
defined by
  \begin{align*}
    C^\upts(j) &= \textstyle \bigwedge_{i\in X} (C(i)\rightarrow I_{ij})
    \quad\text{and}\quad
    D^\downts(i) = \textstyle \bigwedge_{j\in Y} (D(j)\rightarrow I_{ij}).
  \end{align*}
Here, $\bigwedge$ is the infimum in $L$ (in our case, 
since $X$ and $Y$ are finite, infimum coincides with minimum 
if $L$ is linearly ordered).
The set 
$${\cal B}(X,Y,I)=\{\tu{C,D}\in L^X\times L^Y \mid 
      C^\up=D \mbox{ and } D^\down=C\}$$ 
of all formal concepts of $I$ is called the 
\emph{concept lattice} of $I$
and forms indeed a complete lattice when equipped with a natural subconcept-superconcept 
ordering, see \cite{Bel:Clofl} for details. 
Formal concepts are simple models of concepts in the sense of traditional,
Port-Royal logic. 
For a formal concept $\tu{C,D}$,
$C$ and $D$ are called the
extent and the intent of $\tu{C,D}$;
the degrees $C(i)$ and $D(j)$ are interpreted as the 
degrees to which
the concept applies to object $i$ and attribute $j$, respectively.
The graded setting takes into account
that most concepts used by humans are graded rather than clear-cut. 

For a set 
%
\(
   {\cal F}=\{ \tu{C_1,D_1},\dots,\tu{C_k,D_k} \}
\)
of formal concepts of $I$ with a fixed order given by the indices, denote by $A_{\cal F}$ 
and $B_{\cal F}$ the $n\times k$ and $k\times m$ matrices defined by
\[
   (A_{\cal F})_{il}= (C_l)(i)  \quad\mbox{and}\quad
   (B_{\cal F})_{lj}= (D_l)(j).
\]
That is,  the $l$th column of $A_{\cal F}$ consists of grades assigned to 
the objects by $C_l$ and the $l$th row of $B_{\cal F}$ consists of grades assigned 
to attributes by $D_l$.

If $I=A_{\cal F}\circ B_{\cal F}$, $\cal F$ can be seen as a set of
factors which fully explain the data. In such a case, we call
the formal concepts from $\cal F$ \textit{factor concepts}.
In this case, the factors have a natural, easy-to-understand meaning as 
is demonstrated in Section \ref{sec:ie}.
Let $\rho(I)$ denote the Schein rank of $I$, i.e. 
\[
    \rho(I) = \min\{ k \mid I = A\circ B \mbox{ for some } n\times k
     \mbox{ and } k\times m \mbox{ matrices } A \mbox{ and } B
    \}.
\]
The following theorem was proven in 
 \cite{Bel:Odmerl}. 

%

\begin{theorem}
\label{thm:fcaof}
  For every $n\times m$ matrix $I$ with entries from $L$
  there exists a set ${\cal F}\subseteq {\cal B}(X,Y,I)$ 
  containing exactly $\rho(I)$ formal concepts for which 
  \(
     I=A_{\cal F}\circ B_{\cal F}.     
  \)
\end{theorem}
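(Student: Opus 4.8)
The plan is to start from an arbitrary optimal decomposition $I = A \circ B$ with $k = \rho(I)$ and to show that each of the $k$ pairs (column of $A$, row of $B$) can be ``rounded'' to a genuine formal concept of $I$ without destroying the equality $I = A\circ B$. Write $a_l \in L^X$ for the $l$th column of $A$ and $b_l \in L^Y$ for the $l$th row of $B$, so that $I_{ij} = \bigvee_{l=1}^k a_l(i) \otimes b_l(j)$. The key observation is that, for a fixed $l$, the ``rectangular'' contribution $a_l \otimes b_l$ (the matrix with entries $a_l(i)\otimes b_l(j)$) is always pointwise below $I$, and the map $\langle C,D\rangle \mapsto$ (this rectangle) is monotone; so we want to enlarge $a_l$ and $b_l$ as much as possible while staying under $I$. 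This is exactly what the arrow operators ${}^\upts$ and ${}^\downts$ accomplish, by the adjointness property $a\otimes b \le c \iff a \le b\to c$.

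The main steps, in order, would be: (1) Fix $l$ and set $D_l = a_l^\upts$, i.e. $D_l(j) = \bigwedge_{i} (a_l(i) \to I_{ij})$, and then $C_l = D_l^\downts = a_l^{\upts\downts}$. By the standard properties of this Galois connection, $\langle C_l, D_l\rangle \in \mathcal{B}(X,Y,I)$, and one has $a_l \le C_l$ and $b_l \le D_l$ — the latter because, from $a_l(i)\otimes b_l(i') \le I_{ij}$ being false in general we instead use that $b_l$ already satisfies $a_l(i)\otimes b_l(j) \le (A\circ B)_{ij} = I_{ij}$ for all $i$, hence $b_l(j) \le a_l(i)\to I_{ij}$ for all $i$, hence $b_l \le a_l^\upts = D_l$; and $a_l \le C_l$ follows from general extensivity of ${}^{\upts\downts}$. (2) Let $\mathcal{F} = \{\langle C_1,D_1\rangle, \dots, \langle C_k,D_k\rangle\}$. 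Since $a_l \le C_l = (A_{\mathcal F})_{\cdot l}$ and $b_l \le D_l = (B_{\mathcal F})_{l\cdot}$, monotonicity of $\otimes$ and of $\bigvee$ gives
\[
  I_{ij} = \bigvee_{l} a_l(i)\otimes b_l(j) \le \bigvee_l C_l(i)\otimes D_l(j) = (A_{\mathcal F}\circ B_{\mathcal F})_{ij}.
\]
(3) For the reverse inequality, observe that each $\langle C_l, D_l\rangle$ being a formal concept means $C_l^\upts = D_l$, which unfolds to $C_l(i)\otimes D_l(j) \le I_{ij}$ for all $i,j$ (again by adjointness applied to $D_l(j) \le C_l(i)\to I_{ij}$); taking the supremum over $l$ yields $(A_{\mathcal F}\circ B_{\mathcal F})_{ij} \le I_{ij}$. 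Combining (2) and (3), $I = A_{\mathcal F}\circ B_{\mathcal F}$, and $|\mathcal F| \le k = \rho(I)$. (4) Finally, $|\mathcal F|$ cannot be strictly smaller than $\rho(I)$ by definition of the Schein rank, and if the $C_l$'s are not distinct we may have $|\mathcal F| < k$; to get ``exactly $\rho(I)$'' concepts one notes that any decomposition through fewer than $\rho(I)$ concepts would contradict minimality, so in fact $|\mathcal F| = \rho(I)$ and all factor concepts are distinct.

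The step I expect to be the main obstacle — or at least the one requiring the most care — is (1), specifically verifying $b_l \le D_l$: this is where one must correctly exploit that $b_l$ appears inside a \emph{supremum} in the definition of $A\circ B$, so that each individual term $a_l(i)\otimes b_l(j)$ is bounded by $I_{ij}$ even though the rectangle $a_l\otimes b_l$ may be a strict part of $I$. Everything else is a routine application of the adjointness equivalence and the monotonicity of $\otimes$ and $\bigvee$, together with the elementary facts about the concept-forming operators already recalled in the excerpt. The argument for distinctness in (4) is a soft minimality argument and should not cause trouble.
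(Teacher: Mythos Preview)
Your proof is correct. The paper itself does not prove this theorem but cites it from \cite{Bel:Odmerl}; the argument you give---replacing each column--row pair $(a_l,b_l)$ of an optimal decomposition by the formal concept $\langle a_l^{\uparrow\downarrow}, a_l^{\uparrow}\rangle$, using adjointness to get $b_l \le a_l^{\uparrow}$ and extensivity to get $a_l \le a_l^{\uparrow\downarrow}$, then bounding $A_{\mathcal F}\circ B_{\mathcal F}$ from above via $C_l(i)\otimes D_l(j)\le I_{ij}$---is essentially the proof given in that reference, and your handling of the cardinality in step~(4) via minimality of $\rho(I)$ is the standard way to close the argument.
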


The theorem says that, in a sense, formal concepts
of $I$ are optimal factors for decompositions.
It follows that when looking for decompositions of $I$, one can restrict 
the search to the set of formal concepts instead of the set of all
possible decompositions.

\section{Algorithm and Complexity of Decompositions}

To prevent misunderstanding, let us define our problem precisely.
For a given (that is, constant for the problem) structure of truth degrees,
i.e. set $L$ equipped with the lattice operations and $\otimes$ and $\to$,
the problem we discuss is a minimization (optimization) problem
\cite{Auea:CA} specified as follows:

\medskip
\noindent
\begin{tabular}{ll}
\textsc{Input:}  & $n\times m$ matrix $I$ with entries from $L$;\\
\textsc{Feasible Solution:} & $n\times k$ and $k\times m$ matrices $A$ and $B$ with entries\\
                                             & from $L$ for which $I=A\circ B$;\\
\textsc{Cost of Solution:} & $k$.
\end{tabular}

\medskip
As indicated above, due to Theorem \ref{thm:fcaof}, we look
for feasible solutions $A$ and $B$ in the form $A_\mathcal{F}$ and $B_\mathcal{F}$
for some $\mathcal{F}$.
Therefore, the algorithm we present in Section \ref{sec:a} computes a set 
$\mathcal{F}$ of formal concepts of $I$ for which $A_\mathcal{F}$ and $B_\mathcal{F}$
is a good feasible solution.
Our algorithm  runs in polynomial time but produces only suboptimal
solutions, i.e. $|\mathcal{F}|\geq \rho(I)$. 
As is shown in Section \ref{sec:cd}, this is a consequence of a fundamental limitation.
Namely, unless P=NP, there does not exist a polynomial
time algorithm producing optimal solutions to the decomposition problem.
We demonstrate experimentally in Section \ref{sec:ie}, however, that the quality of the solutions
provided by our algorithm is reasonable.

In this section as well as in Section \ref{sec:ie} we need the following 
``geometric'' insight.
Let us note that every formal concept $\tu{C_l,D_l}\in\mathcal{F}$ induces a matrix
$J_l=C_l\otimes D_l$ given by
\begin{align}
  (C_l\otimes D_l)_{ij} = C_l(i)\otimes D_l(j),
  \label{eqn:rec}
\end{align}
the rectangular matrix induced by $\tu{C_l,D_l}$ (it results by the Cartesian product
of $C_l$ and $D_l$).
Then $I=A_{\cal F}\circ B_{\cal F}$ means that 
\begin{equation}
    \label{eqn:supI}
    I_{ij} = (J_1)_{ij}\vee\cdots\vee (J_k)_{ij},
\end{equation}
i.e. $I$ is the $\bigvee$-superposition of $J_l$s.

\subsection{Algorithm}
\label{sec:a}

Throughout this section, we assume that $L$ is linearly ordered,
i.e. $a\leq b$ or $b\leq a$ for any two degrees $a,b\in L$.
(The general, non-linear case can be handled with no substantial difficulty
but we prefer to keep things simple, particularly because of the practical importance
of the linear case.)
In such case, (\ref{eqn:supI}) implies that $I=A_{\cal F}\circ B_{\cal F}$
if and only if for each $\tu{i,j}\in\{1,\dots,n\}\times\{1,\dots,m\}$
there exists $\tu{C_l,D_l}\in \mathcal{F}$ for which
\begin{equation}
\label{eqn:cover}
     I_{ij} = C_l(i) \otimes D_l(j).
\end{equation}
In case of (\ref{eqn:cover}), we say that $\tu{C_l,D_l}$ \emph{covers}
$\tu{i,j}$. 
This allows us to see that the problem of finding 
a set $\mathcal{F}$ of formal concepts of $I$ for which 
$I=A_{\cal F}\circ B_{\cal F}$ can be reformulated as the problem of
finding $\mathcal{F}$ such that every pair from the set
\begin{equation}
\label{eqn:U}
  \mathcal{U}=\{\langle i,j\rangle \,|\, I_{ij} \ne 0\}
\end{equation}
is covered by some
$\tu{C_l,D_l}\in \mathcal{F}$.
Since $C_l(i) \otimes D_l(j)\leq I_{ij}$ is always the case \cite{Bel:Fgc},
we need not worry about overcovering.
We now see that every instance of our decomposition problem may be rephrased as an 
instance of the well-known set cover problem, see e.g. \cite{Auea:CA,CoLeRiSt:IA} in which 
the set to be covered is
$\mathcal{U}$ and the system of sets that may be used to cover $\mathcal{U}$
is
\[
     \{  \{\tu{i,j}\,;\, I_{ij}\leq C(i)\otimes D(j)\} \mid \tu{C,D}\in\mathcal{B}(X,Y,I)\}.
\]
Accordingly, one can use the well-known greedy approximation algorithm \cite{Auea:CA} for solving
set cover to select a set $\mathcal{F}$ for formal concepts for which 
$I=A_{\cal F}\circ B_{\cal F}$. However, this would be a costly way
from the computational complexity point of view. Namely, one would need to compute
the possibly rather large set $\mathcal{B}(X,Y,I)$ first and, worse, repeatedly
iterate over this set in the greedy set cover algorithm.

Instead, we propose a different greedy algorithm. 
The idea is to supply promising candidate factor concepts \emph{on demand} during
the factorization procedure, as opposed to computing all candidate factor 
concepts beforehand.
The algorithm generates the promising candidate factor concepts by looking
for promising columns. 
A technical property which we utilize is the fact that 
for each formal concept $\langle C,D\rangle$,
\begin{align*}
  D = \textstyle\bigcup_{j \in Y}\{\deg{D(j)}{j}\}^{\downarrow\uparrow},
\end{align*}
i.e. each intent $D$ is a union of
intents $\{\deg{D(j)}{j}\}^{\downarrow\uparrow}$ \cite{Bel:Clofl} and
that $C=D^\downarrow$ by definition.
Here, $\{\deg{D(j)}{j}\}$ denotes a graded singleton, i.e.
\[
    \{\deg{D(j)}{j}\}(j') =
    \left\{
     \begin{array}{ll}
          D(j) & \text{if } j'=j,\\
          0 & \text{if } j'\not=j.
      \end{array}
     \right.
\]
As a consequence, we may construct any formal concept by adding
sequentially $\{\deg{a}{j}\}^{\downarrow\uparrow}$ to the empty 
set of attributes. Our algorithm follows a greedy approach
that makes us select $j \in Y$ and a degree $a \in L$ which maximize the size of
\begin{align}
  D \oplus_a j =
  \{\langle k,l\rangle \!\in \mathcal{U} \,|\,
  D^{+\downarrow}(k) \otimes 
  D^{+\downarrow\uparrow}(l) \geq I_{kl}\},
  \label{eqn:RV_set}
\end{align}
where $D^+ = D \cup \{\deg{a}{j}\}$ and 
$\cal U$ denotes the set of $\tu{i,j}$ of $I$ (row $i$, column $j$) for which
the corresponding entry $I_{ij}$ is not covered yet.
At the start, 
$\mathcal{U}$ is initialized according to  (\ref{eqn:U}).
As the algorithm proceeds, 
$\mathcal{U}$ gets updated by removing from it the pairs
$\tu{i,j}$ which have been covered by the selected formal concept
$\tu{C,D}$.
Note that $|D \oplus_a j|$ is the number of entries of $I$
which are covered by formal concept $\tu{D^{+\downarrow},D^{+\downarrow\up}}$,
i.e. by the concept generated by $D^+$, the intent of the current candidate concept
$\tu{C,D}$ extended by $\{\deg{a}{j}\}$.
Therefore, instead of going through all possible formal concepts and selecting
the factors from them,
we just go through columns and degrees and add them repeatedly as to maximize 
the value $V$ of the corresponding formal concepts, until such addition is possible.
The resulting algorithm is summarized below.

\medskip
\begin{algorithm}{Find-Factors}{I}
  \mathcal{U} \= \{\langle i,j\rangle \,|\, I_{ij} \ne 0\} \\
  \mathcal{F} \= \emptyset \\
  \begin{WHILE}{\mathcal{U} \ne \emptyset}
    D \= \emptyset \\
    V \= 0 \\
    \!\!\text{\textbf{select}} \langle j,a\rangle
    \text{\textbf{that maximizes}} |D \oplus_a j| \\
    \begin{WHILE}{|D \oplus_a j| > V}
      V \= |D \oplus_a j| \\
      D \= (D \cup \{\deg{a}{j}\})^{\downarrow\uparrow} \\
      \!\!\text{\textbf{select}} \langle j,a\rangle
      \text{\textbf{that maximizes}} |D \oplus_a j|
    \end{WHILE} \\
    C \= D^{\downarrow} \\
    \mathcal{F} \= \mathcal{F} \cup \{\langle C,D\rangle\} \\
    \begin{FOR}{\langle i,j\rangle \in \mathcal{U}}
      \begin{IF}{I_{ij} \leq C(i) \otimes D(j)} \\
        \mathcal{U} \= \mathcal{U} \mathop{\backslash} \{\langle i,j\rangle\}
      \end{IF}
    \end{FOR}
  \end{WHILE} \\
  \RETURN \mathcal{F}
\end{algorithm}

\medskip
The main loop of the algorithm (lines 3--16) is executed until all
the nonzero entries of $I$ are covered by at least one factor in $\mathcal{F}$.
The code between lines 4 and 10 constructs an intent by adding the most promising
columns. After such an intent $D$ is found, we construct the corresponding
factor concept and add it to $\mathcal{F}$. The loop between lines 13 and 16
ensures that all matrix entries covered by the last factor are removed
from $\mathcal{U}$. Obviously, the algorithm is sound and finishes after
finitely many steps
 (polynomial in terms of $n$ and $m$) with a set $\mathcal{F}$ of factor concepts.

\subsection{Complexity of Finding Optimal Decompositions}
\label{sec:cd}


As mentioned above, there is no guarantee that our algorithm finds an optimal
decomposition, i.e. the one with $k=\rho(I)$.
The following theorem shows that, unless P=NP, no polynomial time algorithm
which finds optimal decompositions exists.
 
\begin{theorem}
  The decomposition problem, i.e. the problem to find for a given $n\times m$ 
  matrix $I$ with grades an $n\times k$ matrix $A$  and a $k\times m$ matrix $B$ 
  for which $I=A\circ B$ with $k$ as small as possible,
  is NP-hard.
\end{theorem}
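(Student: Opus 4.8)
The plan is to reduce from the classical \textsc{Set Cover} decision problem (or equivalently from the NP-hardness of Boolean matrix decomposition, which is established in \cite{NaMaWoAm:Mahlas,Sto:Sbpinpc,BeVy:Dof}). The key observation is that the decomposition problem studied here contains Boolean matrix decomposition as the special case in which the scale $L=\{0,1\}$ and $\otimes=\min$. So the first step is to fix, once and for all, such a two-element residuated lattice as the ``constant structure of truth degrees'' in the problem specification; this is legitimate since the problem is parameterized by a fixed $L$. Under this choice, the product $\circ$ defined by (\ref{eqn:dec}) becomes exactly the Boolean matrix product, $\rho(I)$ becomes the Boolean (Schein) rank, and an instance of our problem is literally an instance of the Boolean decomposition problem.

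Next I would make the reduction from \textsc{Set Cover} explicit rather than merely cite the Boolean result, so that the paper is self-contained. Given a \textsc{Set Cover} instance — a universe $U=\{u_1,\dots,u_n\}$ and a family $\mathcal{S}=\{S_1,\dots,S_m\}$ of subsets of $U$ with a target $k$ — build the $n\times m$ binary incidence matrix $I$ with $I_{ij}=1$ iff $u_i\in S_j$. The crux is the claim that $\rho(I)\le k$ (in fact, that one can decompose $I$ as $A\circ B$ with $A$ of width $k$) if and only if $U$ can be covered by $k$ sets from $\mathcal{S}$. One direction: from a cover $S_{j_1},\dots,S_{j_k}$, take $B$ to be the $k\times m$ matrix whose $l$th row is the characteristic vector of $S_{j_l}$ viewed as a subset-relation — more precisely, build the rectangular ``concept'' blocks whose union is $I$ — and the corresponding $A$; verifying $I=A\circ B$ is the routine ``superposition of rectangles'' computation, exactly as in (\ref{eqn:supI}). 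The other direction: given $I=A\circ B$ with inner dimension $k$, each of the $k$ factors contributes an all-ones combinatorial rectangle (row set $\times$ column set) contained in $I$; the row sets of these rectangles must together cover all rows with a nonzero entry, i.e. all of $U$ since every $u_i$ lies in some $S_j$; and each such rectangle's column set, being a set of columns sharing a common set of rows, corresponds to (is contained in) the neighborhood of some single column, hence to some $S_j$. This yields $k$ sets covering $U$.

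The main obstacle — and the one point that needs genuine care rather than bookkeeping — is the ``other direction'' argument that a rectangle in $I$ can be charged to a \emph{single} set $S_j$. A factor of width one gives a rectangle $R_{\mathrm{row}}\times R_{\mathrm{col}}$ with $R_{\mathrm{row}}\times R_{\mathrm{col}}\subseteq\{(i,j):I_{ij}=1\}$, but $R_{\mathrm{col}}$ need not be a singleton. The clean fix is to appeal to Theorem \ref{thm:fcaof}: without loss of generality the optimal factors are formal concepts of $I$, so each factor's row set is an extent, of the form $\{D(j)\}^{\downarrow}$-type closure; in the Boolean case such an extent is $\bigcap_{j\in R_{\mathrm{col}}}\{i:I_{ij}=1\}$, which is nonempty and contained in $\{i:I_{ij_0}=1\}$ for any $j_0\in R_{\mathrm{col}}$ — so picking any one column $j_0$ in the block and using $S_{j_0}$ already covers all rows of that block. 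Running over the (at most) $k$ factors gives at most $k$ chosen sets whose union is $U$. Conversely, Theorem \ref{thm:fcaof} also guarantees that the concept-lattice decomposition realizing a genuine cover has width at most $k$, so the two reductions are exact and the map is clearly polynomial-time computable. Finally I would remark that since \textsc{Set Cover} is NP-complete, NP-hardness of our (more general) decomposition problem follows, and note that the same conclusion holds for the real unit interval $L=[0,1]$ with any of the standard t-norms by the same construction, since restricting $I$, $A$, $B$ to $\{0,1\}$-entries recovers the Boolean case as a sub-problem.
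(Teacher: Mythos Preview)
Your reduction from \textsc{Set Cover} does not work; the equivalence ``$\rho(I)\le k$ iff $U$ has a cover of size $\le k$'' is false in the cover-to-decomposition direction. Take $U=\{1,2\}$, $\mathcal{S}=\{S_1,S_2,S_3\}$ with $S_1=\{1\}$, $S_2=\{2\}$, $S_3=\{1,2\}$. The incidence matrix is
\[
I=\begin{pmatrix}1&0&1\\0&1&1\end{pmatrix},
\]
whose minimum set cover has size $1$ (namely $\{S_3\}$), yet the Boolean rank of $I$ is $2$: the set of $1$-entries is not a single combinatorial rectangle because $(1,1)$ and $(2,2)$ are present while $(1,2)$ is not. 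Your own sketch of this direction is the symptomatic place: ``take $B$ to be the $k\times m$ matrix whose $l$th row is the characteristic vector of $S_{j_l}$'' cannot be made precise, since $S_{j_l}\subseteq U$ indexes \emph{rows}, not columns, and a cover of $U$ says nothing about expressing each column of $I$ as a union of chosen blocks. Your other direction (decomposition $\Rightarrow$ cover) is fine, but a one-way implication is not a reduction.

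The paper avoids this by reducing from the \textsc{Set Basis} problem of Stockmeyer \cite{Sto:Sbpinpc}: given $S_1,\dots,S_n\subseteq\{1,\dots,m\}$ and $k$, decide whether there exist $P_1,\dots,P_k\subseteq\{1,\dots,m\}$ such that every $S_i$ is a union of some of the $P_l$. Encoding $S_i$ as the $i$th row of a binary $I$, one has $\rho(I)\le k$ iff such a basis exists; both directions are immediate since a width-$k$ Boolean decomposition is literally a choice of $k$ column-index sets $P_l$ (the supports of the rows of $B$) together with, for each row $i$, a selection $Q_i$ of them (the support of the $i$th row of $A$) whose union is $S_i$. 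The paper also handles arbitrary $L$ in one stroke: if $I$ is $\{0,1\}$-valued and $I=A\circ B$ over $L$, then $I_{ij}=1$ forces some $A_{il}\otimes B_{lj}=1$, hence $A_{il}=B_{lj}=1$ (since $a\otimes b\le a\wedge b$), so thresholding $A,B$ at $1$ already yields a Boolean decomposition of the same width. Your closing remark that ``restricting $A,B$ to $\{0,1\}$-entries recovers the Boolean case'' needs exactly this argument to be correct; without it you have not shown that the optimum over a larger $L$ cannot be strictly smaller.
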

\begin{proof}
The theorem is an easy consequence of established reductions,
see \cite{Nau:Sc,NaMaWoAm:Mahlas} and also
\cite{BeVy:Dof,MiMiGiDaMa:TDBP,VaAtGu:Trmpfmdsf}.
Namely, by definition of NP-hardness of optimization problems, we need
to show that the corresponding decision problem is NP-complete.
The decision problem, which we denote by $\Pi$ in what follows, is to decide for a given 
$I$ and positive integer $k$ whether there exists
a decomposition $I=A\circ B$ with the inner dimension $k$ or smaller.
Now, $\Pi$ is NP-complete because the decision version of the set basis problem, which is
known to be NP-complete \cite{Sto:Sbpinpc}, is reducible to it.
The decision version of the set basis problem is:
Given a collection $S=\{S_1,\dots,S_n\}$ of sets $S_i\subseteq\{1,\dots,m\}$
and a positive integer $k$, is there a collection
$P=\{P_1,\dots,P_k\}$ of subsets $P_l\subseteq\{1,\dots,m\}$
such that for every $S_i$ there is a subset $Q_i\subseteq \{P_1,\dots,P_k\}$
for which $\bigcup Q_i=S_i$ (i.e., the union of all sets from $Q_i$ is equal
to $S_i$)? 
This problem is easily reducible to $\Pi$:
Given $S$, define an $n\times m$ matrix
$I$ by $I_{ij}=1$ if $j\in S_i$ and $I_{ij}=0$ if $j\not\in S_i$. 
Such reduction works for every $L$ and $\otimes$ because
we always have $1\otimes 1=1$ and $1\otimes 0=0\otimes 1=0\otimes 0=0$.
Namely,
one can check that if $I=A\circ B$ for $n\times k$ and $k\times m$
matrices $A$ and $B$ with entries from $L$ then 
$P_l$ ($l=1,\dots,k$) and $Q_i$, defined by $j\in P_l$ if $B_{lj}=1$
and $P_l\in Q_i$ if $A_{il}=1$, represent a solution to the set basis
problem given by $S$.
Conversely, if $P_l$ and $Q_l$ represent a solution to the set basis problem,
the matrices $A$ and $B$ defined by 
$B_{lj}=1$ if $j\in P_l$ and $B_{lj}=0$ if $j\not\in P_l$, and
$A_{il}=1$ if $P_l\in Q_i$ and $A_{il}=0$ if $P_l\not\in Q_i$,
are matrices with entries from $L$ which represent a solution to $\Pi$.
\end{proof}

\section{Examples and Experiments}
\label{sec:ie}

In Section \ref{sec:dd}, we examine in detail a factor analysis of 2004 Olympic
Decathlon data. We include this example to illustrate the notions involved
in our methods but most importantly to argue that the algorithm developed
in this paper can be used to obtain reasonable factors from data with grades.
In Section \ref{sec:ee}, we present results of an experimental evaluation
of our algorithm.

\subsection{Decathlon data}\label{sec:dd}



Grades of ordinal scales are conveniently represented by numbers,
such as the Likert scale $\{1,\dots,5\}$. In such a case we assume these
numbers are normalized and taken from the unit interval $[0,1]$.
As an example, the Likert scale is represented by 
$L=\{0,\frac{1}{4},\frac{1}{2},\frac{3}{4},1\}$.
Due to the well-known Miller's $7\pm2$ phenomenon \cite{Mil:Mns}, 
one might argue that
we should restrict ourselves to small scales.

In this section, we explore factors explaining the athletes' performance in the event.
Tab.\,\ref{tab:dec_input} (top) contains the
results of top five athletes in 2004 Olympic Games decathlon in points 
which are obtained using the IAAF Scoring Tables for 
Combined Events. Note that the IAAF Scoring Tables provide us with an ordinal
scale and a ranking function assigning the scale values to athletes.
We are going to look at whether this data can be explained using 
formal concepts as factors.

\begin{table}
  \caption{2004 Olympic Games decathlon}
  \label{tab:dec_input}
  \centering




  \medskip
  \small
  \textbf{Scores of Top 5 Athletes}

  \medskip
  \begin{tabular}{|@{\,}r@{\,}|*{10}{|@{\,}c@{\,}}|}
    \hline
    \rule{0pt}{9pt}
    & $10$
    & \emph{lj}
    & \emph{sp}
    & \emph{hj}
    & $40$
    & $11$
    & \emph{di}
    & \emph{pv}
    & \emph{ja}
    & \emph{$15$} \\
    \hline
    \hline
    \rule{0pt}{9pt}Sebrle &
    $894$ & $1020$ & $873$ & $915$ & $892$ &
    $968$ & $844$ & $910$ & $897$ & $680$ \\
    \hline
    \rule{0pt}{9pt}Clay &
    $989$ & $1050$ & $804$ & $859$ & $852$ &
    $958$ & $873$ & $880$ & $885$ & $668$ \\
    \hline
    \rule{0pt}{9pt}Karpov &
    $975$ & $1012$ & $847$ & $887$ & $968$ &
    $978$ & $905$ & $790$ & $671$ & $692$ \\
    \hline
    \rule{0pt}{9pt}Macey &
    $885$ & \hfill $927$ & $835$ & $944$ & $863$ &
    $903$ & $836$ & $731$ & $715$ & $775$ \\
    \hline
    \rule{0pt}{9pt}Warners &
    $947$ & \hfill $995$ & $758$ & $776$ & $911$ &
    $973$ & $741$ & $880$ & $669$ & $693$ \\
    \hline
  \end{tabular}

  \bigskip
  \textbf{Incidence Data Table with Graded Attributes}

  \medskip
  \small
  \begin{tabular}{|@{\,}r@{\,}|*{10}{|@{\,}c@{\,}}|}
    \hline
    \rule{0pt}{9pt}
    & $10$
    & \emph{lj}
    & \emph{sp}
    & \emph{hj}
    & $40$
    & $11$
    & \emph{di}
    & \emph{pv}
    & \emph{ja}
    & \emph{$15$} \\
    \hline
    \hline
    \rule{0pt}{9pt}Sebrle &
    $0.50$ & $1.00$ & $1.00$ & $1.00$ & $0.75$ &
    $1.00$ & $0.75$ & $0.75$ & $1.00$ & $0.75$ \\
    \hline
    \rule{0pt}{9pt}Clay &
    $1.00$ & $1.00$ & $0.75$ & $0.75$ & $0.50$ &
    $1.00$ & $0.75$ & $0.50$ & $1.00$ & $0.50$ \\
    \hline
    \rule{0pt}{9pt}Karpov &
    $1.00$ & $1.00$ & $0.75$ & $0.75$ & $1.00$ &
    $1.00$ & $1.00$ & $0.25$ & $0.25$ & $0.75$ \\
    \hline
    \rule{0pt}{9pt}Macey &
    $0.50$ & $0.50$ & $0.75$ & $1.00$ & $0.75$ &
    $0.50$ & $0.75$ & $0.25$ & $0.50$ & $1.00$ \\
    \hline
    \rule{0pt}{9pt}Warners &
    $0.75$ & $0.75$ & $0.50$ & $0.50$ & $0.75$ &
    $1.00$ & $0.25$ & $0.50$ & $0.25$ & $0.75$ \\
    \hline
  \end{tabular}

  \begin{flushleft}
    \textbf{Legend:}
    $10$---100 meters sprint race;
    $lj$---long jump;
    $sp$---shot put;
    $hj$---high jump;
    $40$---400 meters sprint race;
    $11$---110 meters hurdles;
    $di$---discus throw;
    $pv$---pole vault;
    $ja$---javelin throw;
    $15$---1500 meters run.
\end{flushleft}

\end{table}

We first transform the data from~\ref{tab:dec_input}\,(top) to
a five-element scale 
\begin{align}
  L = \{0.00,0.25,0.50,0.75,1.00\}
  \label{eqn:scale}
\end{align}
by a natural transformation and rounding. Namely, for each of
the disciplines, we first take the lower and highest scores achieved
among all athletes who have finished the 
decathlon event, see~Table~\ref{tab:scaling}.
Then, for each discipline, we make a linear transform of values from the
$[\min,\max]$ interval to the real unit interval. For instance,
in case of \emph{lj} (long jump), we consider the function
\begin{align}
  f_{lj}(x) &= \cfrac{x - 723}{(1050 - 723)} = 
  \cfrac{x - 723}{(1050 - 723)} = \cfrac{x - 723}{327}
  \label{eqn:transform}
\end{align}
ana analogously for the other disciplines, cf. Table~\ref{tab:scaling}.
Finally, for each athlete we compute the value of functions
like~\eqref{eqn:transform} and round the results to the closest value
from the discrete scale~\eqref{eqn:scale}. That is, instead of working
with numerical values as in Table~\ref{tab:dec_input}\,(top), we use
the graded dataset in Table~\ref{tab:dec_input}\,(bottom)
which describes the athletes' performance using the five-element scale
where the table entries are degrees to which athletes achieve high scores
for particular disciplines (with respect to the other athletes participating
in the event).
As a consequence, the factors then have a simple reading. 
Namely, the grades to which a factor applies to an athlete can be described
in natural language as ``not at all'', ``little bit'',
``half'', ``quite'', ``fully'', or the like. 

\begin{table}
  \caption{Lowest and highest scores in the 2004 Olympic Games decathlon}
  \label{tab:scaling}
  \centering
  \medskip
  \begin{tabular}{|@{\,}r@{\,}|*{10}{|@{\,}c@{\,}}|}
    \hline
    \rule{0pt}{9pt}
    & $10$
    & \emph{lj}
    & \emph{sp}
    & \emph{hj}
    & $40$
    & $11$
    & \emph{di}
    & \emph{pv}
    & \emph{ja}
    & \emph{$15$} \\
    \hline
    \hline
    \rule{0pt}{9pt}lowest &
    $782$ & \hfill $723$ & $672$ & $670$ & $673$ &
    $803$ & $661$ & \hfill $673$ & $598$ & $466$ \\
    \hline
    \rule{0pt}{9pt}highest &
    $989$ & $1050$ & $873$ & $944$ & $968$ &
    $978$ & $905$ & $1035$ & $897$ & $791$ \\
    \hline
  \end{tabular}
\end{table}

\noticka{
kontext jako matice:
\begin{align*}
  I = 
  \left(
    \renewcommand{\arraystretch}{0.6}
    \arraycolsep=0.7mm
    \begin{array}{*{10}{c}}
      0.50 & 1.00 & 1.00 & 1.00 & 0.75 & 1.00 & 0.75 & 0.75 & 1.00 & 0.75 \\
      1.00 & 1.00 & 0.75 & 0.75 & 0.50 & 1.00 & 0.75 & 0.50 & 1.00 & 0.50 \\
      1.00 & 1.00 & 0.75 & 0.75 & 1.00 & 1.00 & 1.00 & 0.25 & 0.25 & 0.75 \\
      0.50 & 0.50 & 0.75 & 1.00 & 0.75 & 0.50 & 0.75 & 0.25 & 0.50 & 1.00 \\
      0.75 & 0.75 & 0.50 & 0.50 & 0.75 & 1.00 & 0.25 & 0.50 & 0.25 & 0.75 \\
    \end{array}
  \right)
\end{align*}
}

Using shades of gray to represent grades from the five-element scale $L$,
the matrix $I$ corresponding to Tab.\,\ref{tab:dec_input}\,(bottom) can be 
visualized in the following array (rows correspond to athletes,
columns correspond to disciplines, the darker the array entry, the
higher the score):
\begin{align*}
  \text{\includegraphics{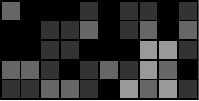}}
\end{align*}

\noticka{
The concept lattice associated to $I$ contains 
celkovy pocet konceptu: $128$
}

The algorithm described in Section \ref{sec:a} found a set $\cal F$
of $7$ formal concepts which factorize $I$, i.e. for which
$I=A_{\cal F}\circ B_{\cal F}$ (note that in this example, we have
used the {\L}ukasiewicz t-norm on $L$).
These factor concepts are shown in Table~\ref{tab:fac_concepts}
in the order in which they were produced by the algorithm.
In addition, Fig.\,\ref{tab:fac_rect} shows the corresponding
rectangular matrices, cf.~\eqref{eqn:rec}.

For example, factor concept $F_1$ applies to Sebrle to degree $0.5$,
to both Clay and Karpov to degree $1$, to Macey to degree $0.5$,
and to Warners to degree $0.75$.
Furthermore, this factor concept applies to attribute
$10$ (100\,m) to degree $1$, to attribute
$lj$ (long jump) to degree $1$, to attribute sp (shot put) 
to degree $0.75$, etc.
This means that an excellent performance (degree $1$) in 100\,m,
an excellent performance in long jump, a very good performance (degree $0.75$)
in shot put, etc. are particular manifestations of this factor concept.
On the other hand, only a relatively weak performance (degree $0.25$) 
in javelin throw and pole vault are manifestations of this factor.

\begin{table*}[t]
  \caption{Factor concepts}
  \label{tab:fac_concepts}
  \centering

  \smallskip
  \small
  \scalebox{.8}{%
  \begin{tabular}{@{}|c|l|l|@{}}
    \hline
    \rule{0pt}{9pt}$F_i$ & \emph{Extent} & \emph{Intent} \\
    \hline
    \hline
    \rule{0pt}{9pt}$F_1$ &
    $\{\deg{.5}{\text{S}},\text{C},\text{K},
    \deg{.5}{\text{M}},\deg{.75}{\text{W}}\}$ &
    $\{10,\text{lj},\deg{.75}{\text{sp}},\deg{.75}{\text{hj}},\deg{.5}{40},11,
    \deg{.5}{\text{di}},\deg{.25}{\text{pv}},\deg{.25}{\text{ja}},
    \deg{.5}{15}\}$ \\
    \rule{0pt}{9pt}$F_2$ &
    $\{\text{S},\deg{.75}{\text{C}},\deg{.25}{\text{K}},
    \deg{.5}{\text{M}},\deg{.25}{\text{W}}\}$ &
    $\{\deg{.5}{10},\text{lj},\text{sp},\text{hj},\deg{.75}{40},11,
    \deg{.75}{\text{di}},\deg{.75}{\text{pv}},\text{ja},
    \deg{.75}{15}\}$ \\
    \rule{0pt}{9pt}$F_3$ &
    $\{\deg{.75}{\text{S}},\deg{.5}{\text{C}},
    \deg{.75}{\text{K}},\text{M},\deg{.5}{\text{W}}\}$ &
    $\{\deg{.5}{10},\deg{.5}{\text{lj}},\deg{.75}{\text{sp}},\text{hj},
    \deg{.75}{40},\deg{.5}{11},\deg{.75}{\text{di}},\deg{.25}{\text{pv}},
    \deg{.5}{\text{ja}},15\}$ \\
    \rule{0pt}{9pt}$F_4$ &
    $\{\text{S},\deg{.75}{\text{C}},\deg{.75}{\text{K}},
    \deg{.5}{\text{M}},\text{W}\}$ &
    $\{\deg{.5}{10},\deg{.75}{\text{lj}},\deg{.5}{\text{sp}},
    \deg{.5}{\text{hj}},\deg{.75}{40},11,\deg{.25}{\text{di}},
    \deg{.5}{\text{pv}},\deg{.25}{\text{ja}},\deg{.75}{15}\}$ \\
    \rule{0pt}{9pt}$F_5$ &
    $\{\deg{.75}{\text{S}},\deg{.75}{\text{C}},\text{K},
    \deg{.75}{\text{M}},\deg{.25}{\text{W}}\}$ &
    $\{\deg{.75}{10},\deg{.75}{\text{lj}},\deg{.75}{\text{sp}},
    \deg{.75}{\text{hj}},\deg{.75}{40},\deg{.75}{11},\text{di},
    \deg{.25}{\text{pv}},\deg{.25}{\text{ja}},\deg{.75}{15}\}$ \\
    \rule{0pt}{9pt}$F_6$ &
    $\{\deg{.75}{\text{S}},\deg{.5}{\text{C}},\text{K},
    \deg{.75}{\text{M}},\deg{.75}{\text{W}}\}$ &
    $\{\deg{.75}{10},\deg{.75}{\text{lj}},\deg{.75}{\text{sp}},
    \deg{.75}{\text{hj}},40,\deg{.75}{11},\deg{.5}{\text{di}},
    \deg{.25}{\text{pv}},\deg{.25}{\text{ja}},\deg{.75}{15}\}$ \\
    \rule{0pt}{9pt}$F_7$ &
    $\{\text{S},\text{C},\deg{.25}{\text{K}},
    \deg{.5}{\text{M}},\deg{.25}{\text{W}}\}$ &
    $\{\deg{.5}{10},\text{lj},\deg{.75}{\text{sp}},\deg{.75}{\text{hj}},
    \deg{.5}{40},11,\deg{.75}{\text{di}},\deg{.5}{\text{pv}},\text{ja},
    \deg{.5}{15}\}\rangle$ \\
    \hline
  \end{tabular}}
\end{table*}

Therefore, a decomposition $I = A_\mathcal{F} \circ B_\mathcal{F}$
exists with $7$ factors
where:
\begin{align*}
  A_\mathcal{F} &=
  \left(
    \renewcommand{\arraystretch}{0.6}
    \arraycolsep=0.7mm
    \begin{array}{*{7}{c}}
      0.50 & 1.00 & 0.75 & 1.00 & 0.75 & 0.75 & 1.00 \\
      1.00 & 0.75 & 0.50 & 0.75 & 0.75 & 0.50 & 1.00 \\
      1.00 & 0.25 & 0.75 & 0.75 & 1.00 & 1.00 & 0.25 \\
      0.50 & 0.50 & 1.00 & 0.50 & 0.75 & 0.75 & 0.50 \\
      0.75 & 0.25 & 0.50 & 1.00 & 0.25 & 0.75 & 0.25 \\
    \end{array}
  \right)\!, \\[4pt]
  B_\mathcal{F} &=
  \left(
    \renewcommand{\arraystretch}{0.6}
    \arraycolsep=0.7mm
    \begin{array}{*{10}{c}}
      1.00 & 1.00 & 0.75 & 0.75 & 0.50 & 1.00 & 0.50 & 0.25 & 0.25 & 0.50 \\
      0.50 & 1.00 & 1.00 & 1.00 & 0.75 & 1.00 & 0.75 & 0.75 & 1.00 & 0.75 \\
      0.50 & 0.50 & 0.75 & 1.00 & 0.75 & 0.50 & 0.75 & 0.25 & 0.50 & 1.00 \\
      0.50 & 0.75 & 0.50 & 0.50 & 0.75 & 1.00 & 0.25 & 0.50 & 0.25 & 0.75 \\
      0.75 & 0.75 & 0.75 & 0.75 & 0.75 & 0.75 & 1.00 & 0.25 & 0.25 & 0.75 \\
      0.75 & 0.75 & 0.75 & 0.75 & 1.00 & 0.75 & 0.50 & 0.25 & 0.25 & 0.75 \\
      0.50 & 1.00 & 0.75 & 0.75 & 0.50 & 1.00 & 0.75 & 0.50 & 1.00 & 0.50 \\
    \end{array}
  \right)\!.
\end{align*}
Again, using shades of gray, this decomposition can be depicted as:
\begin{align*}
  \text{\lower-2mm\hbox{\includegraphics{figure01.pdf}}}
  \mathop{\lower-6mm\hbox{$=$}}
  \text{\lower-2mm\hbox{\includegraphics{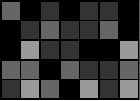}}}
  \mathop{\lower-6mm\hbox{$\circ$}}
  \text{\includegraphics{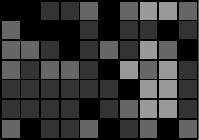}}
\end{align*}

\begin{figure}
  \centering
  \begin{tabular}{@{}c@{~}c@{~}c@{}}
    \includegraphics{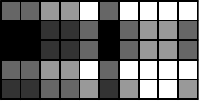} &
    \includegraphics{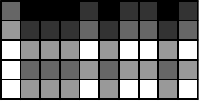} &
    \includegraphics{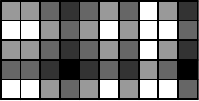} \\[-1pt]
    $F_1$ & $F_2$ & $F_3$
  \end{tabular} \\[1ex]
  \begin{tabular}{@{}c@{~}c@{~}c@{~}c@{}}
    \includegraphics{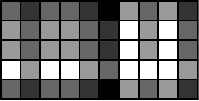} &
    \includegraphics{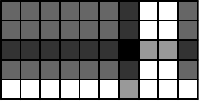} &
    \includegraphics{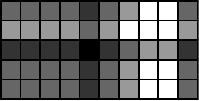} &
    \includegraphics{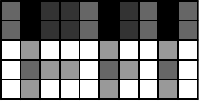} \\[-1pt]
    $F_4$ & $F_5$ & $F_6$ & $F_7$ \\[-6pt]
  \end{tabular}
  \caption{Factor concepts as rectangular patterns}
  \label{tab:fac_rect}
\end{figure}

Fig.\,\ref{tab:fac_fill} demonstrates what portion of the data 
matrix $I$ is explained using just some of the factor concepts from 
$\cal F$. 
The first matrix labeled by $46\%$ shows 
$A_{{\cal F}_1}\circ B_{{\cal F}_1}$ for ${\cal F}_1$ consisting
of the first factor $F_1$ only. That is, the matrix is just
the rectangular pattern corresponding to $F_1$, cf. 
Fig.\,\ref{tab:fac_rect}. 
As we can see, this matrix is contained in $I$,
i.e. approximates $I$ from below, in that
$(A_{{\cal F}_1}\circ B_{{\cal F}_1})_{ij}\leq I_{ij}$ for all
entries (row $i$, column $j$). 
Label $46\%$ indicates that $46\%$ of the entries
of $A_{{\cal F}_1}\circ B_{{\cal F}_1}$ and $I$ are equal. 
In this sense, the first factor explains $46\%$ of the data.
Note however, that several of the $54\%=100\%-46\%$ of the other entries 
of $A_{{\cal F}_1}\circ B_{{\cal F}_1}$ are close to the corresponding entries of $I$,
so a measure of closeness of $A_{{\cal F}_1}\circ B_{{\cal F}_1}$ and $I$
which takes into account also close entries, rather than exactly equal ones only,
would yield a number larger than $46\%$.

The second matrix in Fig.\,\ref{tab:fac_fill}, with label $72\%$,
shows $A_{{\cal F}_2}\circ B_{{\cal F}_2}$ for ${\cal F}_2$ consisting
of $F_1$ and $F_2$. That is, the matrix demonstrates what portion
of the data matrix $I$ is explained by the first two factors.
Again, $A_{{\cal F}_2}\circ B_{{\cal F}_2}$ approximates $I$ from below
and $72\%$ of the entries of $A_{{\cal F}_2}\circ B_{{\cal F}_2}$ and $I$
coincide now.
Note again that even for the remaining $28\%$ of entries,
$A_{{\cal F}_2}\circ B_{{\cal F}_2}$ provides a reasonable
approximation of $I$, as can be seen by comparing the matrices
representing $A_{{\cal F}_2}\circ B_{{\cal F}_2}$ and $I$, i.e.
the one labeled by $72\%$ and the one labelled by $100\%$.

Similarly, the matrices labeled by $84\%$, $92\%$, $96\%$, 
$98\%$, and $100\%$ represent 
$A_{{\cal F}_l}\circ B_{{\cal F}_l}$ for $l=3,4,5,6,7$,
for sets ${\cal F}_l$ of factor concepts consisting
of $F_1,\dots,F_l$. 
We can conclude from the visual inspection of the matrices that
already the two or three first factors explain the data
reasonably well.

\begin{figure}
  \centering
  \begin{tabular}{@{}c@{~}c@{~}c@{}}
    \includegraphics{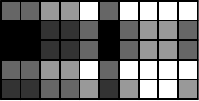} &
    \includegraphics{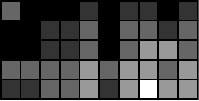} &
    \includegraphics{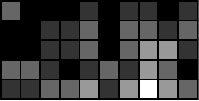} \\[-1pt]
    $46\%$ & $72\%$ & $84\%$
  \end{tabular} \\[1ex]
  \begin{tabular}{@{}c@{~}c@{~}c@{~}c@{}}
    \includegraphics{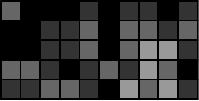} &
    \includegraphics{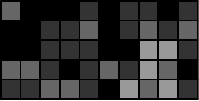} &
    \includegraphics{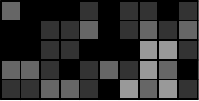} &
    \includegraphics{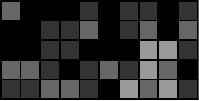} \\[-1pt]
    $92\%$ & $96\%$ & $98\%$ & $100\%$ \\[-6pt]
  \end{tabular}
  \caption{$\boldsymbol{\bigvee}$-superposition of factor concepts}
  \label{tab:fac_fill}
\end{figure}

Let us now focus on the interpretation of the factors.
Fig.\,\ref{tab:fac_rect} is helpful as it shows the 
clusters corresponding to the factor concepts which draw
together the athletes and their performances in the events.

Factor $F_1$: Manifestations of this factor with grade $1$ are
$100$\,m, long jump, $110$\,m hurdles.
This factor can be interpreted as the ability to run fast
for short distances
 (speed).
Note that this factor applies particularly to Clay and Karpov
which is well known in the world of decathlon.
Factor $F_2$: Manifestations of this factor with grade $1$ are
long jump, shot put, high jump, $110$\,m hurdles,
javelin.
$F_2$ can be interpreted as the ability to apply very high force
in a very short term (explosiveness).
$F_2$ applies particularly to Sebrle, and then to Clay, who
are known for this ability. 
Factor $F_3$: Manifestations with grade $1$ are high jump and 
$1500$\,m. This factor is typical for lighter, not 
very muscular athletes (too much muscles prevent jumping high
and running long distances).
Macey, who is evidently that type among decathletes 
($196$\,cm and $98$\,kg) is the athlete to whom the factor applies
to degree $1$.
These are the most important factors behind data matrix $I$.

\noticka{
JAK TO DOPADLO PRO CELE STARTOVNI POLE:
tabulka ma $28$ radku (zavodniku), pocet formalnich konceptu je $11175$;
faktory vypadaji tak, ze pro exaktni faktorizaci bylo nalezeno $14$ faktoru,
tohle nevim jestli rikat nebo jak to podat. Kazdopadne pro prvnich sest
faktoru pokryva 25.8\%, 41.3\%, 52.8\%, 62.3\%, 69.8\% a 77\% zaznamu.
}


\subsection{Experimental Evaluation}\label{sec:ee}
We now present experiments with exact and approximate factorization of
selected publicly-available datasets and randomly generated matrices and
their evaluation.
First, we observed how close is the number of
factors found by the algorithm \textsc{FindFactors} to a known number
of factors in artificially created matrices. In this experiment, we were generating $20 \times 20$ matrices
according to various distributions of $5$ grades. These matrices were
generated by multiplying $m \times k$ and $k \times n$ matrices. Therefore,
the resulting matrices were factorizable with at most $k$ factors. 
Then, we executed
the algorithm to find $\cal F$ and observed how close is the number
$|{\cal F}|$ of factors to $k$. 
The results are depicted in Tab.\,\ref{tab:exact_fac}.
We have observed that in the average case, the choice of a t-norm is not
essential and all t-norms give approximately the same results.
In particular, Tab.\,\ref{tab:exact_fac} describes results
for \L ukasiewicz and minimum t-norms.
Rows of Tab.\,\ref{tab:exact_fac} correspond
to numbers 
$k=5,7,\dots,15$ denoting the known number of factors.
For each $k$, we computed the average number of factors
produced by our algorithm in $2000$
$k$-factorizable matrices. The average values are written in the
form of ``average number of factors $\pm$ standard deviation''. 

\begin{table}
  \caption{Exact factorizability}
  \label{tab:exact_fac}
  \centering
  \small

  \medskip
  \begin{tabular}{@{\,}r|r@{$\,\pm\,$}lr@{$\,\pm\,$}l@{\,}}
    \toprule
    \multicolumn{1}{@{\,}c@{~}}{} &
    \multicolumn{2}{@{~}c@{~}}{\L ukasiewicz $\otimes$} &
    \multicolumn{2}{@{~}c@{\,}}{minimum $\otimes$} \\
    \multicolumn{1}{@{\,}c@{~}}{$k$} &
    \multicolumn{2}{@{~}c@{~}}{no. of factors} &
    \multicolumn{2}{@{~}c@{\,}}{no. of factors} \\
    \midrule
    $5$ & $5.205$ & $0.460$ & $6.202$ & $1.037$ \\
    $7$ & $7.717$ & $0.878$ & $10.050$ & $1.444$ \\
    $9$ & $10.644$ & $1.316$ & $13.379$ & $1.676$ \\
    $11$ & $13.640$ & $1.615$ & $15.698$ & $1.753$ \\
    $13$ & $16.423$ & $1.879$ & $17.477$ & $1.787$ \\
    $15$ & $18.601$ & $2.016$ & $18.721$ & $1.863$ \\
    \bottomrule
  \end{tabular}
\end{table}

As mentioned above, factorization and factor analysis of binary data is
a special case of our setting with $L=\{0,1\}$, 
i.e. with the scale containing just two grades.
Then, the matrix product $\circ$ given by (\ref{eqn:dec}) coincides
with the Boolean matrix multiplication and
the problem of decomposition of graded matrices 
coincides with the problem of decomposition of binary matrices
into the Boolean product of binary matrices.
We performed experiments with our algorithm in this particular
case with three large binary data sets (binary matrices)
from the Frequent Itemset Mining Dataset
Repository\footnote{\url{http://fimi.cs.helsinki.fi/data/}}.
In particular, we considered 
the CHESS ($3196 \times 75$ binary matrix),
CONNECT ($67557 \times 129$ binary matrix),
and MUSHROOM ($8124 \times 119$ binary matrix) data sets.
The results are shown in Fig.\,\ref{fig:bool}.
The $x$-axes correspond to the number of factors
(from $1$ up to $50$ factors were observed) and
the $y$-axes are percentages of data explained by the factors.
For example, we can see that the first $10$ factors of CHESS explain more
than $70\%$ of the data, i.e. $A_{\cal F}\circ B_{\cal F}$ covers more than
$70\%$ of the nonzero entries of CHESS for $|{\cal F}|=10$. In all the three
cases, we can see a tendency that a relatively small number of factors
(compared to the number of attributes in the datasets) cover a significant
part of the data.

\begin{figure}
  \centering
  \begin{tabular}{@{}c@{~~}c@{~~}c@{}}
    \includegraphics{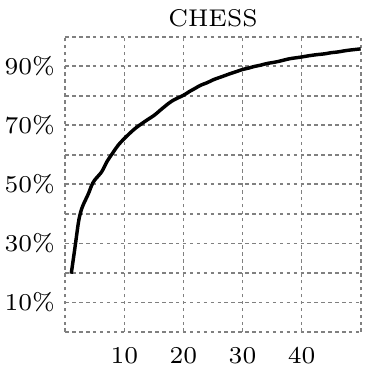} &
    \includegraphics{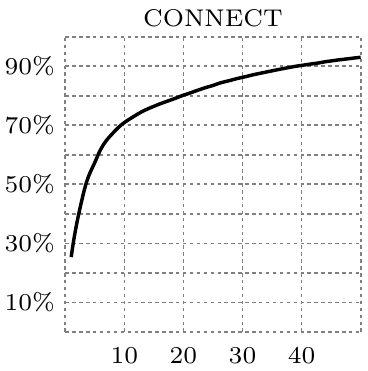} &
    \includegraphics{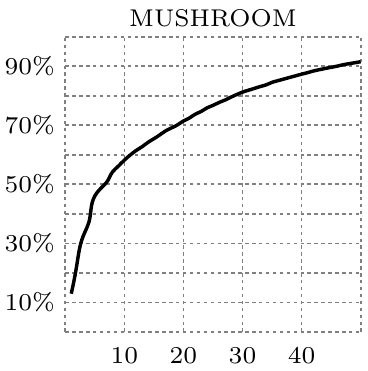}
  \end{tabular}

  \caption{Approximate factorization of Boolean matrices by first 50 factors}
  \label{fig:bool}
\end{figure}

A similar tendency can also be observed for graded incidence data. For instance,
we have utilized the algorithm in factor analysis of
the FOREST FIRES~\cite{CoMo:Forest} dataset from
the UCI Machine Learning Repository\footnote{\url{http://archive.ics.uci.edu/ml/}}.
In its original form, the dataset contains real values. It has been therefore
transformed into a graded incidence matrix representing relationship between
spatial coordinates within the Montesinho park map (rows) and 50 different groups
of environmental and climate conditions (columns). The matrix entries are degrees
(coming from an equidistant \L ukasiewicz chain
$L = \{\frac{n}{100} \,|\, n \text{ is integer, } 0 \leq n \leq 100\}$)
to which there has been a large area of burnt forest in the sector of the map under
the environmental conditions. Factor analysis of data in this form can help reveal
factors which contribute to forests burns in the park. The exact factorization
has revealed 46 factors which explain 50 attributes. As in case of the Boolean
datasets, relatively small number of factors explain large portions of the data.
For instance, more than $50\%$ of the data is
covered by $10$ factors, more than $80\%$ of the data is covered by $23$ factors,
see~Fig.\,\ref{fig:forest}.

\begin{figure}
  \centering
  \includegraphics{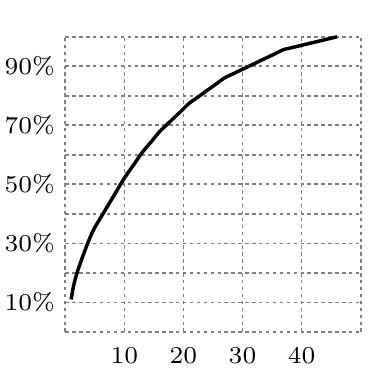}

  \caption{Factorization of graded incidence matrix FOREST FIRES}
  \label{fig:forest}
\end{figure}

\section{Conclusions}
\label{sec:c}

We presented a novel approach to decomposition and factor analysis of matrices with
grades, i.e. of a particular form of ordinal data. The factors in this approach 
correspond to formal concepts in the data matrix.
The approach is justified by a theorem according to which
optimal decompositions are attained by using formal concepts as factors.
The relationship between the factors
and original attributes is a non-linear one.
An advantageous feature of the model is a transparent way
of treating the grades which results in good interpretability of factors.
We observed that the decomposition problem is NP-hard as an optimization
problem.
We proposed a greedy  algorithm for computing suboptimal decompositions
and provided results of experiments demonstrating its behavior.
Furthermore, we presented a detailed example of factor discovery
which demonstrates that the method yields interesting factors from data.
Since the method developed naturally allows for a linguistic interpretation
of factors, it may be considered as a step toward what might be regarded  
a linguistic factor analysis of qualitative data.

Future research will include the following topics.
First, a comparison, both theoretical and experimental, to other methods
  of matrix decompositions,  in particular to the methods emphasizing 
  good interpretability, such as non-negative matrix factorization
  \cite{LeSe:Lponmf}.
 Second, an investigation of  approximate decompositions
  of $I$, i.e. decompositions to $A$ and $B$ for which $A\circ B$
is approximately equal to $I$ with respect to a reasonable notion
of approximate equality.
Third, development of further theoretical insight focusing particularly
  on reducing further the space of factors to which the search
  for factors can be restricted. 
Fourth,  study the computational complexity aspects of the problem of approximate
factorization, in particular the approximability of the problem of finding
decompositions of matrix $I$ \cite{Auea:CA}.
Fifth, explore further the applications of the decompositions studied in this paper,
particularly in areas such as psychology, sports data, or customer surveys, where
ordinal data is abundant.

\section*{Acknowledgment}
\noindent
 R. Belohlavek acknowledges
supported by grant No. P202/10/0262 of the Czech Science Foundation. 
V. Vychodil acknowledges support by the ESF project No. CZ.1.07/2.3.00/20.0059, the project
is co-financed by the European Social Fund and the state budget of the Czech Republic
    Preliminary version of this paper was presented at
    the International Conference on Formal Concept Analysis, Darmdstadt, Germany,
    in 2009.


\end{document}